\newtheorem{Theorem}{Theorem}[section]
\newtheorem{Proposition}[Theorem]{Proposition}
\newtheorem{Property}[Theorem]{Property}
\theoremstyle{definition}
\newtheorem{Example}[Theorem]{Example}
\theoremstyle{remark}
\numberwithin{equation}{section}
\numberwithin{figure}{section}
\numberwithin{table}{section}
\newcommand{\Cov}{\mathrm{Cov}}
\newcommand{\Var}{\mathrm{Var}}
\newcommand{\Esp}{\mathrm{E}}
\newcommand{\Prob}{\mathrm{P}}
\newcommand{\Ind}{\mathrm{I}}
\newcommand{\bi}{ \begin{itemize}  }
\newcommand{\ei}{\end{itemize}}
\newcommand{\bfor}{ \begin{eqnarray*} }
\newcommand{\efor}{\end{eqnarray*}}
\newcommand{\Xvec}{\boldsymbol{X}}
\newcommand{\xvec}{\boldsymbol{x}}
\begin{document}

\title{AUTOCALIBRATION AND TWEEDIE-DOMINANCE FOR INSURANCE PRICING WITH MACHINE LEARNING}
\author{
Michel Denuit\\
Institute of Statistics, Biostatistics and Actuarial Science\\
UCLouvain\\
Louvain-la-Neuve, Belgium
\\[2mm]
Arthur Charpentier\\
Universit\'e du Qu\'ebec à Montr\'eal (UQAM)\\
Montreal, Quebec, Canada
\\[2mm]
Julien Trufin\\
Department of Mathematics\\
Universit\'e Libre de Bruxelles (ULB)\\
Brussels, Belgium
}

\maketitle

\begin{abstract}
Boosting techniques and neural networks are particularly effective machine learning methods
for insurance pricing.
Often in practice, the sum of fitted values can depart from the observed totals to a large extent.
The possible lack of balance when models are trained by minimizing deviance
outside the familiar GLM with canonical 
link setting has been documented in W\"uthrich (2019, 2020, 2021).
The present paper aims to further study this phenomenon when learning proceeds
by minimizing Tweedie deviance.
It is shown that minimizing deviance involves a trade-off between the
integral of weighted differences of lower partial moments and the bias measured on
a specific scale.
Hence, there is no guarantee that the sum of fitted values stays close to observed totals if the latter
bias term is dominated by the former one entering deviance.
Autocalibration is then proposed as a remedy.
This new method to correct for bias adds an extra local GLM step to the analysis
with the output of the first step as only predictor.
Theoretically, it is shown that it implements the autocalibration concept in pure
premium calculation and ensures that balance also holds on a local scale,
not only at portfolio level as with existing bias-correction techniques.
	\\[3mm]
	{\bf Keywords:} Risk classification, Method of marginal totals, Tweedie distribution family, Convex order.
\end{abstract}

\section{Introduction and motivation}

In the 1960s, North-American actuaries pioneered risk classification with the help of minimum bias methods,
after Bailey and Simon (1960) and Bailey (1963). The central idea is that 
an acceptable set of premiums should reproduce the experience within sub-portfolios corresponding to each 
level of meaningful risk factors (like gender or age, for instance) and also the overall experience, i.e. 
be balanced for each level and in total (leading to the method of marginal totals, or MMT in short).

In the late 1980s, it turned out that for any GLM with canonical link function and a score containing an intercept, 
there is an exact balance between fitted and observed aggregated responses over the whole data set and 
for any level of the categorical features. This formally related GLMs to minimum bias and MMT,
as documented in Mildenhall (1999). This connection greatly facilitated
the wide acceptance of GLMs in actuarial practice that has been particularly fast after the development of
powerful computer tools.
The objective function used for model training (often call loss function in machine 
learning) generally corresponds to deviance or log-likelihood,
in a model accounting for the nature of insurance data under consideration:  typically, Poisson for claim counts,
Gamma for average claim severities and compound Poisson sums with Gamma-distributed terms for claim totals,
all belonging to the Tweedie family with power variance function. 

Actuarial risk classification remained bridged to statistical regression models and naturally 
followed their evolution from GLMs to GAMs, trees and random forests, gradient and statistical boosting, 
projection pursuit and neural networks, to name just a few. This evolution took place gradually, following 
the availability of computational resources in statistical software. The deviance established 
itself as the only objective function, even when the interpretability of the likelihood
equations in terms of balance was lost, and the
underlying MMT caution to GLMs has been progressively forgotten.

However, it turned out that tree-based boosting models and Neural Networks trained to minimize 
deviance often violate total balance, even on the training data set.
This has been documented by W\"uthrich (2019, 2020, 2021).
Advanced learning models are indeed able to produce scores that better correlate
with the response, as well as with the true premium compared to classical GLMs. 
This comes from the additional freedom obtained by letting scores to
depend in a flexible way of available features, not only linearly. But breaking the overall balance is
the price to pay for this higher correlation. 
Because no constraint on the replication of the observed total, or global balance is imposed, 
machine learning tools are also able to substantially increase overall bias.
As pointed out by W\"uthrich (2021), machine learning tools may provide accurate fit
at individual policy level but the global price level may be completely wrong.
It is therefore crucial to correct candidate premiums produced by flexible machine
learning tools before they can be used in practice.

A natural remedy consists in restoring global balance at each step of the iterative procedure
used to optimize the loss function. This is easily implemented
by revising the intercept (under canonical link function) or
by constraining the optimization so that the observed total matches its fitted counterpart.
This simple solution restores global balance but does not ensure that financial equilibrium also
holds in meaningful sub-portfolios (remember that GLMs with canonical link not only imposes global balance,
at portfolio level when the score comprises an intercept, but also at the level of risk classes
determined by binary features).

For this reason, we propose a new strategy based on the concept of autocalibration 
(see, e.g., Kruger and Ziegel, 2020). This approach guarantees global balance as well as local
equilibrium in the spirit of the original MMT. This
simple and effective solution to the problem is implemented by adding an extra step implementing
MMT within a local GLM analysis. Specifically, after the analysis has been performed with a method
that does not necessarily respect marginal totals, a local constant GLM fit is achieved in order
to restore the connection with MMT. 
Thus, as advocated by W\"uthrich (2019, 2020, 2021), we also combine GLMs with advanced 
statistical learning tools but in a different way. 
W\"uthrich (2019, 2020, 2021) used Neural Networks to produce new features in the last hidden layer,
to be used in a GLM replacing the output layer. In this approach, Neural Networks allow actuaries to perform
feature-engineering to feed the GLM score and marginal totals are respected by the use of GLMs in
the last step. W\"uthrich (2019, 2020) then explains how to interpret the working features generated by the last
hidden layer of Neural Networks. This approach can also be related
to the polishing procedure proposed by Zumel (2019) where random forests 
predictions are used to train a linear model
in a second step. As pointed out by this author, the extra step should not 
be performed on the same data in order to avoid a potential source of overfitting (called nested-model bias).
The local GLM approach proposed in this paper applies to any statistical learning
model, not specifically to Neural Networks or random forests. It consists
in using the candidate premium produced in the first step as unique feature in the second autocalibration step. 
This ensures that the feature space reduces to the real line. 
By using a local constant, or intercept-only GLM, the output of the first step defines optimal neighborhoods
to perform local averaging of observed losses.


The remainder of the paper is structured as follows. Section 2 provides the reader with formal
definitions and notation used throughout the text. 
Section 3 recalls Tweedie model and associated deviance. In Section 4, a mixture representation of
Tweedie deviance is derived. Precisely, Tweedie deviance is decomposed into the sum of the
integral of weighted differences of lower partial moments and the bias measured on
a specific scale. This decomposition
is used to understand the consequences of training the model by minimizing Tweedie deviance.
Special attention is devoted to the Poisson regression case. 
The concept of Tweedie dominance is also introduced there, as a natural counterpart to
Bregman dominance, or forecast dominance discussed in Kruger and Ziegel (2020).
In Section 5, we propose a simple and powerful method to restore balance at both global and
local levels, based on the concept of autocalibration. 
Tweedie dominance between autocalibrated predictors reduces to the well-known convex
order, or stop-loss order with equal means that has been proposed to compare predictors
by Denuit et al. (2019) and Kruger and Ziegel (2020). 
A numerical study is provided in Section 6. The final Section 7 discusses the results and concludes.

\section{Context, definition and notation}

Let us now describe the notation used in this paper. We
consider a response $Y$ and a set of features $X_1,\ldots,X_p$ gathered in the vector $\Xvec\in\mathcal{X}$ (classically, $\mathcal{X}\subset\mathbb{R}^p$). 
In this paper, the response is typically the number of claims reported to the insurance
company by a given policyholder, the average claim severity or the total claim amount
in relation with this contract.
The dependence structure inside the random vector $(Y,X_1,\ldots,X_p)$
is exploited to extract the information contained in $\Xvec$ about $Y$.
In actuarial pricing, the aim is to evaluate the pure premium as accurately as possible. This means that
the target is the conditional expectation $\mu(\Xvec)=\Esp[Y|\Xvec]$ of the response $Y$ (claim
number or claim amount) given the available information $\Xvec$.
Henceforth, $\mu(\Xvec)$ is referred to as the true (pure) premium.
Notice that in some applications, $\mu(\Xvec)$ only refers to one component of the pure premium.
For instance, working in the frequency-severity decomposition of insurance losses,
$\mu(\Xvec)$ can be either the expected number of insured events or the expected claim size, or severity.

The function $\xvec\mapsto\mu(\xvec)=\Esp[Y|\Xvec=\xvec]$ is unknown to the actuary,
and may exhibit a complex behavior in $\xvec$. This is why this function is approximated
by a (working, or actual) premium $\xvec\mapsto\pi(\xvec)$ with a simpler structure.
When the analyst is working in the frequency-severity decomposition 
of insurance losses, $\pi(\xvec)$ targets the expected number of insured events or the
mean claim severity, separately. Once fitted on the training data set using an appropriate
learning procedure, this produces estimates $\widehat{\pi}(\xvec)$
for $\mu(\xvec)$, or fitted values.

The developments in this paper apply in any setting where a global balance is desirable,
that is, where it is important that the sum of estimates does not deviate too much from the
sum of actual observations at both the entire portfolio level and also more locally,
in meaningful classes of policyholders. The reason is obvious: the sum of the pure premiums must match the
claim total as accurately as possible so that the insurance company is able to indemnify all
third-parties and beneficiaries in execution of the contracts, without excess nor deficit,
by the very definition of pure premium (expense loadings and cost-of-capital
charges are added into the calculation at a later stage, when moving to commercial premiums). This naturally
translates into a global balance constraint: considering that the total claim figures are representative
of next-year's experience, it is important that the sum of fitted premiums $\widehat{\pi}(\xvec)$
match the sum of responses $Y$ taken as proxy for the total premium income (i.e., the sum of $\mu(\xvec)$),
as closely as possible. But local equilibrium is also essential to guarantee a competitive pricing.

The merits of a given pricing tool can be assessed using the pair
$\big(\mu(\Xvec),\widehat{\pi}(\Xvec)\big)$ so that we are back to the bivariate case
even if there were thousands of features comprised in $\Xvec$.
What really matters is the correlation between $\widehat{\pi}(\Xvec)$ and $\mu(\Xvec)$ but as $\mu(\Xvec)$ is unobserved
the actuary can only use its noisy version $Y$ to reveal the agreement of the true premium
$\mu(\Xvec)$ with its working counterpart $\widehat{\pi}(\Xvec)$. In insurance applications,
$\widehat{\pi}(\Xvec)$ is supposed to be used as a premium so that correlation is important, but it is also
essential that the sum of predictions $\widehat{\pi}(\Xvec)$ matches the sum of actual losses as closely as possible,
as explained before. This is expressed by the global balance condition and its local version.

To ease the exposition, we assume that predictor $\widehat{\pi}(\Xvec)$ under consideration,
as well as the conditional expectation $\mu(\Xvec)$ are continuous random variables
admitting probability density functions.
This is generally the case when there is at least one continuous feature contained in the
available information $\Xvec$ and the function $\widehat{\pi}$ is a continuously increasing function of a real score
built from $\Xvec$.
However, this rules out predictions based on discrete features only, as well as piecewise constant predictors, e.g., a single tree.
Indeed, then $\widehat{\pi}(\Xvec)$ takes only a limited number of values. As actuarial pricing is nowadays based on more sophisticated models
(trees being combined into random forests, for instance), this continuity assumption does not really restrict the generality of the
approach.

\section{Tweedie model and deviance}

In this paper, we assume that the response obeys a probability distribution belonging to the
Tweedie subclass of the Exponential Dispersion family. Precisely, this means that we
assume that the logarithm of the probability mass function for a discrete response, 
or of the probability density function for a continuous response, is of the
form $\ln f=(y\theta-a(\theta))/\phi$ up to a constant term, for some known dispersion parameter $\phi$ (that may include a weight)
and non-decreasing and convex cumulant function $a(\cdot)$ with $a'(\theta)=\Esp[Y]=\mu$.
The variance is then given by
$$
\Var[Y]=\phi a''(\theta)=\phi V(\mu)
$$
where the variance function $V(\cdot)$ corresponds to the second derivative of the cumulant function whose argument $\theta$
has been replaced in terms of $\mu$, that is, $\theta=(a')^{-1}(\mu)$. 

The Tweedie subclass corresponds to variance functions of the form $V(\mu)=\mu^\xi$ for some power parameter $\xi$.
Table \ref{TabTweedie} lists all Tweedie distributions.
Negative values of $\xi$ give continuous distributions on the whole real axis. 
For $0<\xi<1$, there is no Exponential Dispersion distribution with such variance function.
Only the cases $\xi \geq 1$ are thus interesting for applications in insurance.
In the remainder of this paper, we thus restrict our analysis to $\xi \geq 1$.

For $\xi\in(1,2)$, Tweedie distributions correspond to compound
Poisson-Gamma distributions, that is, compound Poisson sums with Gamma-distributed summands. Starting from
\begin{eqnarray*}
\frac{\mathrm{d}}{\mathrm{d}\mu}\ln f
&=&\frac{y-\mu}{\phi V(\mu)},
\end{eqnarray*}
we get with $V(\mu)=\mu^\xi$ that the probability density function over $(0,\infty)$
is given by
\begin{eqnarray*}
\ln f(y)&=&\int\frac{y-m}{\phi m^\xi}\mathrm{d}m\\
&=&\frac{1}{\phi}\left(y\frac{\mu^{1-\xi}}{1-\xi}-\frac{\mu^{2-\xi}}{2-\xi}\right)+\text{constant}
\end{eqnarray*}
with probability mass at zero $\exp\left(-\frac{\mu^{2-\xi}}{\phi(2-\xi)}\right)$.
Such compound Poisson-Gamma distributions can be used for
modeling annual claim amounts, having positive probability at zero and a continuous distribution
on the positive real numbers. This offers an alternative to the decomposition of total losses
into claim numbers and claim severities, using Poisson distribution for modeling claim counts and Gamma
distribution for claim severities. We refer the reader to Delong et al. (2021) for a through presentation
of Tweedie models.

\begin{table}
\begin{center} 
\begin{tabular}{lll}
\hline
 & Type & Name \\
\hline 
$\xi<0$ & Continuous & -  \\
$\xi=0$ & Continuous & Normal  \\
$0<\xi<1$ & Non existing & - \\
$\xi=1$ & Discrete & Poisson \\
$1<\xi<2$ & Mixed, non-negative & Compound Poisson sum\\
&&with Gamma-distributed terms  \\
$\xi=2$ & Continuous, positive & Gamma \\
$2<\xi<3$ & Continuous, positive & - \\
$\xi=3$ & Continuous, positive & Inverse Gaussian \\
$\xi>3$ & Continuous, positive & - \\
\hline
\end{tabular}
\end{center}
\caption{Tweedie distributions and corresponding power parameters.}
\label{TabTweedie}
\end{table}

Let $\widehat{\pi}$ be the estimated mean response $Y$ built from some training set (all formulas
in this paper are meant given this training set). 
The respective performances of competing models can then be assessed
on the basis of a validation set $\{(Y_i,\Xvec_i),\hspace{2mm}i=1,2,\ldots,n\}$, 
that has not been used to obtain $\widehat{\pi}$. 
Performances of $\widehat{\pi}$ are generally assessed with the help of
out-of-(training) sample deviance, also called predictive deviance and given by
$$
D_n(\xi,\widehat{\pi})
=\frac{1}{n}\sum_{i=1}^nL\big(Y_i,\widehat{\pi}(\Xvec_i)\big)
$$
where $L(\cdot,\cdot)$ is the loss function adopted to train the model.
If $n$ is large enough then we can resort to the limiting value
$$
D_n(\xi,\widehat{\pi})\to D(\xi,\widehat{\pi})=
\Esp\big[L\big(Y^{\text{new}},\widehat{\pi}(\Xvec^{\text{new}})\big)\big]
\text{ as }n\to\infty,
$$
where $(Y^{\text{new}},\Xvec^{\text{new}})$ is a new observation, independent of, 
and distributed as those $(Y_i,\Xvec_i)$ contained in the training set. 
In this paper, we compare models on the basis of the large-sample version
of the predictive deviance. This approach is meaningful in insurance applications where the
analyst is typically in a data-rich situation.

Henceforth, we compare models on the basis of the predictive Tweedie deviance
that reduces to
\begin{equation}
\label{deviance_loss_Tweedie}
D(\xi,\widehat{\pi})=\left\{
\begin{array}{l}
\Esp\left[\widehat{\pi}(\Xvec^\text{new}) - Y^\text{new} \ln \widehat{\pi}(\Xvec^\text{new})\right] \text{ for $\xi=1$}\\
 \\
\Esp\left[\ln\widehat{\pi}(\Xvec^\text{new}) + \frac{Y^\text{new}}{\widehat{\pi}(\Xvec^\text{new})}\right] \text{ for $\xi=2$}\\
 \\
\Esp\left[ \frac{\widehat{\pi}(\Xvec^\text{new})^{2-\xi}}{2-\xi} - \frac{Y^\text{new} \widehat{\pi}(\Xvec^\text{new})^{1-\xi}}{1-\xi} \right] \text{ for $\xi>1$ and $\xi\neq 2$}
\end{array}
\right. .
\end{equation}
In the remainder of the paper, we use \eqref{deviance_loss_Tweedie}
to assess the performances of a given predictor $\widehat{\pi}$.

\section{Tweedie dominance}

Tweedie dominance is defined as dominance for every Tweedie deviances \eqref{deviance_loss_Tweedie}. Precisely,
$\widehat{\pi}_2$ outperforms $\widehat{\pi}_1$ in terms of Tweedie dominance if the inequality
$D(\xi,\widehat{\pi}_2)\leq D(\xi,\widehat{\pi}_1)$ holds true for every power parameter $\xi\geq 1$.
Tweedie dominance appears to be a particular case of Bregman dominance, also called forecast dominance defined as dominance for every Bregman loss
function. We refer the interested reader to Kruger and Ziegel (2020) and the references therein
for an extensive presentation of this concept.
Tweedie dominance is thus a particular stochastic order relation used to
compare the performances of two estimators $\widehat{\pi}_1$ and $\widehat{\pi}_2$
for the conditional means. We refer the reader to Shaked and Shanthikumar (2007)
for a general presentation of stochastic order relations and to Denuit et al. (2005)
for applications to insurance.

The next result provides the actuary with a sufficient condition for a model to outperform a competitor
in terms of Tweedie dominance.

\begin{Proposition}
\label{PropBetterModel}
Define
\begin{equation}
\label{DefPsi}
\psi_\xi(\pi)=\left\{
\begin{array}{l}
\ln \pi \text{ for $\xi=2$}\\
 \\
\frac{\pi^{2-\xi}}{2-\xi} \text{ else}.
\end{array}
\right.
\end{equation}
Then, $\widehat{\pi}_2$ outperforms $\widehat{\pi}_1$ in terms of
Tweedie dominance if
\begin{equation}
\label{cond1}
\Esp\left[  \psi_\xi(\widehat{\pi}_1(\Xvec^\mathrm{new}))\right] \geq \Esp\left[  \psi_\xi(\widehat{\pi}_2(\Xvec^\mathrm{new}))\right]
\text{ for all }\xi\geq 1
\end{equation}
and 
\begin{equation}
\label{cond2}
\Esp\big[ Y^\mathrm{new} \Ind\left[ \widehat{\pi}_1(\Xvec^\mathrm{new}) \leq t\right] \big] 
\geq \Esp\big[ Y^\mathrm{new} \Ind\left[ \widehat{\pi}_2(\Xvec^\mathrm{new}) \leq t\right] \big] 
\quad \text{for all } t\geq 0.
\end{equation}
\end{Proposition}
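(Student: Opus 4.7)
The plan is to decompose each Tweedie deviance into a $\psi_\xi$-bias term and a $Y$-weighted antiderivative term, so that hypotheses \eqref{cond1} and \eqref{cond2} control the two pieces separately. First I would introduce the companion function $\Phi_\xi(\pi)=\ln\pi$ if $\xi=1$ and $\Phi_\xi(\pi)=\pi^{1-\xi}/(1-\xi)$ otherwise, which satisfies $\Phi_\xi'(s)=s^{-\xi}$. A direct case-by-case check against \eqref{deviance_loss_Tweedie} shows that
\[
D(\xi,\widehat{\pi}) = \Esp\bigl[\psi_\xi(\widehat{\pi}(\Xvec^{\text{new}}))\bigr] - \Esp\bigl[Y^{\text{new}}\,\Phi_\xi(\widehat{\pi}(\Xvec^{\text{new}}))\bigr]
\]
for every $\xi\geq 1$. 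Condition \eqref{cond1} already bounds the first summand, so the entire job reduces to proving that the second summand is no smaller for $\widehat{\pi}_2$ than for $\widehat{\pi}_1$ under \eqref{cond2}.

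Next I would pivot the fundamental theorem of calculus at $s=1$: for every $\widehat{\pi}>0$,
\[
\Phi_\xi(\widehat{\pi})-\Phi_\xi(1)= \int_1^\infty \Ind[\widehat{\pi}\geq s]\,s^{-\xi}\,ds \;-\; \int_0^1 \Ind[\widehat{\pi}\leq s]\,s^{-\xi}\,ds,
\]
as is verified separately on $\widehat{\pi}\geq 1$ and on $\widehat{\pi}<1$. Multiplying by $Y^{\text{new}}$, applying Fubini, and subtracting the two resulting identities for $\widehat{\pi}_1$ and $\widehat{\pi}_2$, the $\Phi_\xi(1)\Esp[Y^{\text{new}}]$ constant cancels. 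Using $\Esp[Y^{\text{new}}\Ind[\widehat{\pi}\geq s]]=\Esp[Y^{\text{new}}]-\Esp[Y^{\text{new}}\Ind[\widehat{\pi}\leq s]]$ (legitimate because $\widehat{\pi}_j(\Xvec^{\text{new}})$ has no atoms under the continuity hypothesis of Section~2), the two surviving one-sided integrals merge into
\[
\Esp\bigl[Y^{\text{new}}\Phi_\xi(\widehat{\pi}_2(\Xvec^{\text{new}}))\bigr] - \Esp\bigl[Y^{\text{new}}\Phi_\xi(\widehat{\pi}_1(\Xvec^{\text{new}}))\bigr] = \int_0^\infty \Bigl(\Esp\bigl[Y^{\text{new}}\Ind[\widehat{\pi}_1\leq s]\bigr]-\Esp\bigl[Y^{\text{new}}\Ind[\widehat{\pi}_2\leq s]\bigr]\Bigr)s^{-\xi}\,ds.
\]
Hypothesis \eqref{cond2} makes the integrand pointwise nonnegative, so this difference is $\geq 0$; combining with \eqref{cond1} at the same $\xi$ then yields $D(\xi,\widehat{\pi}_2)\leq D(\xi,\widehat{\pi}_1)$, which is precisely Tweedie dominance since $\xi\geq 1$ was arbitrary.

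The main obstacle I anticipate is the Fubini/integrability step, because the weight $s^{-\xi}$ is non-integrable at $0$ for every $\xi\geq 1$, and also at infinity when $\xi=1$, so the individual one-sided integrals above need not converge unless $\widehat{\pi}_1$ and $\widehat{\pi}_2$ are suitably bounded. The cleanest workaround is either to add such mild boundedness to the standing assumptions on the working premiums or, alternatively, to keep the anchor point generic (replacing $s=1$ by some $s=c>0$) and to pass to the \emph{difference} between the two predictors before integrating in $s$, where the problematic tails cancel. Everything else in the argument is routine bookkeeping.
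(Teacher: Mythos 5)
Your proof is correct and follows essentially the same route as the paper's: the deviance is split into the $\psi_\xi$-bias term controlled by \eqref{cond1} and a cross term $\Esp\big[Y^{\text{new}}\Phi_\xi(\widehat{\pi}(\Xvec^{\text{new}}))\big]$ that is rewritten, via a layer-cake identity, as an integral of the lower partial moments $\Esp\big[Y^{\text{new}}\Ind[\widehat{\pi}(\Xvec^{\text{new}})\leq s]\big]$ against the weight $s^{-\xi}\,\mathrm{d}s$ and then controlled by \eqref{cond2}. The only difference is packaging: you treat all $\xi\geq 1$ at once through the generic antiderivative $\Phi_\xi$ anchored at $s=1$, whereas the paper runs the same computation separately for $\xi=1$, $\xi=2$ and the remaining cases (using the anchor-at-one trick only for $\xi=1$ and the direct layer-cake representation of the nonnegative variable $\widehat{\pi}^{1-\xi}$ otherwise); the integrability caveat you raise is equally present, and equally glossed over, in the paper's version.
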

\begin{proof}
For $\xi=1$, $\widehat{\pi}_2$ is superior to $\widehat{\pi}_1$ if
\begin{eqnarray}
\label{ineq_Poisson}
&&\Esp\left[\widehat{\pi}_1(\Xvec^\text{new})\right]-\Esp\left[Y^\text{new}\ln\widehat{\pi}_1(\Xvec^\text{new}) \right] \geq \Esp\left[\widehat{\pi}_2(\Xvec^\text{new})\right]-\Esp\left[Y^\text{new}\ln\widehat{\pi}_2(\Xvec^\text{new}) \right] \nonumber \\
&\Leftrightarrow&\Esp\left[\widehat{\pi}_1(\Xvec^\text{new})\right] - \Esp\left[\widehat{\pi}_2(\Xvec^\text{new})\right]+\Esp\left[Y^\text{new}\ln\widehat{\pi}_2(\Xvec^\text{new}) \right]-\Esp\left[Y^\text{new}\ln\widehat{\pi}_1(\Xvec^\text{new}) \right] \geq 0.\nonumber \\
\end{eqnarray}
Since the identity
\begin{eqnarray}
\Esp\left[Y^\text{new}\ln\widehat{\pi}(\Xvec^\text{new}) \right] 
&=& \int_0^\infty \Esp\big[ Y^\text{new} \Ind\left[ \ln \widehat{\pi}(\Xvec^\text{new}) > t \right] \big] \mathrm{d}t 
-  \int_{-\infty}^0 \Esp\big[ Y^\text{new} \Ind\left[ \ln \widehat{\pi}(\Xvec^\text{new}) \leq t \right] \big] 
\mathrm{d}t \nonumber \\
&=& \int_1^\infty \Esp\big[ Y^\text{new} \Ind\left[ \widehat{\pi}(\Xvec^\text{new}) > s \right] \big] \frac{1}{s} 
\mathrm{d}s 
-  \int_{0}^1 \Esp\big[ Y^\text{new} \Ind\left[ \widehat{\pi}(\Xvec^\text{new}) \leq s \right] \big] \frac{1}{s} 
\mathrm{d}s \nonumber 
\end{eqnarray}
holds true for any predictor $\widehat{\pi}$,
we have
\begin{eqnarray}
&&\Esp\left[Y^\text{new}\ln\widehat{\pi}_2(\Xvec^\text{new}) \right]-\Esp\left[Y^\text{new}\ln\widehat{\pi}_1(\Xvec^\text{new}) \right] \nonumber \\
&=&\int_1^\infty \big( \Esp\big[ Y^\text{new} \Ind\left[ \widehat{\pi}_2(\Xvec^\text{new}) > s \right] \big] 
- \Esp\big[ Y^\text{new} \Ind\left[ \widehat{\pi}_1(\Xvec^\text{new}) > s \right] \big] \big) 
\frac{1}{s} \mathrm{d}s \nonumber \\
&&- \int_{0}^1 \big( \Esp\big[ Y^\text{new} \Ind\left[ \widehat{\pi}_2(\Xvec^\text{new}) \leq s \right] \big] 
- \Esp\big[ Y^\text{new} \Ind\left[ \widehat{\pi}_1(\Xvec^\text{new}) \leq s \right] \big] \big) 
\frac{1}{s} \mathrm{d}s \nonumber \\
&=&\int_1^\infty \big( \Esp\big[ Y^\text{new} \left(1-\Ind\left[ \widehat{\pi}_2(\Xvec^\text{new}) \leq s \right] \right) \big] - \Esp\big[ Y^\text{new} \left(1-\Ind\left[ \widehat{\pi}_1(\Xvec^\text{new}) \leq s \right] \right)\big] \big) \frac{1}{s} \mathrm{d}s \nonumber \\
&&+ \int_{0}^1 \big( \Esp\big[ Y^\text{new} \Ind\left[ \widehat{\pi}_1(\Xvec^\text{new}) \leq s \right] \big] 
- \Esp\big[ Y^\text{new} \Ind\left[ \widehat{\pi}_2(\Xvec^\text{new}) \leq s \right] \big] \big) 
\frac{1}{s} \mathrm{d}s \nonumber \\
&=&\int_{0}^\infty \big( \Esp\big[ Y^\text{new} \Ind\left[ \widehat{\pi}_1(\Xvec^\text{new}) \leq s \right] \big] 
- \Esp\big[ Y^\text{new} \Ind\left[ \widehat{\pi}_2(\Xvec^\text{new}) \leq s \right] \big] \big) 
\frac{1}{s} \mathrm{d}s \nonumber \\
&\geq&0 \nonumber
\end{eqnarray}
by \eqref{cond2}. 
Hence, both conditions \eqref{cond1} and \eqref{cond2} ensure inequality \eqref{ineq_Poisson}. 

Turning to the case $\xi=2$, $\widehat{\pi}_2$ is superior to $\widehat{\pi}_1$ if
\begin{eqnarray}
\label{ineq_Gamma}
&&\Esp\left[\ln \widehat{\pi}_1(\Xvec^\text{new})\right]+\Esp\left[\frac{Y^\text{new}}{\widehat{\pi}_1(\Xvec^\text{new})} \right] \geq \Esp\left[\ln \widehat{\pi}_2(\Xvec^\text{new})\right]+\Esp\left[\frac{Y^\text{new}}{\widehat{\pi}_2(\Xvec^\text{new})} \right] \nonumber \\
&\Leftrightarrow&\Esp\left[\ln \widehat{\pi}_1(\Xvec^\text{new})\right]- \Esp\left[\ln \widehat{\pi}_2(\Xvec^\text{new})\right]+\Esp\left[\frac{Y^\text{new}}{\widehat{\pi}_1(\Xvec^\text{new})} \right] -\Esp\left[\frac{Y^\text{new}}{\widehat{\pi}_2(\Xvec^\text{new})} \right]  \geq 0.\nonumber \\
\end{eqnarray}
Hence, it suffices to notice that \eqref{cond2} implies $\Esp\left[\frac{Y^\text{new}}{\widehat{\pi}_1(\Xvec^\text{new})} \right] -\Esp\left[\frac{Y^\text{new}}{\widehat{\pi}_2(\Xvec^\text{new})} \right]  \geq 0$ since the identity
\begin{eqnarray}
\Esp\left[\frac{Y^\text{new}}{\widehat{\pi}(\Xvec^\text{new})} \right] 
&=& \int_0^\infty \Esp\left[Y^\text{new} \Ind\left[ \frac{1}{\widehat{\pi}(\Xvec^\text{new})} \geq t\right] \right] 
\mathrm{d}t \nonumber \\
&=& \int_0^\infty \Esp\left[Y^\text{new} \Ind\left[ \widehat{\pi}(\Xvec^\text{new}) \leq \frac{1}{t}\right] \right] 
\mathrm{d}t \nonumber \\
&=& \int_0^\infty \Esp\left[Y^\text{new} \Ind\left[ \widehat{\pi}(\Xvec^\text{new}) \leq s\right] \right] 
\frac{1}{s^2}\mathrm{d}s. \nonumber
\end{eqnarray}
is valid for every predictor $\widehat{\pi}$.

Finally, in the remaining cases, i.e. $\xi>1$ and $\xi \neq 2$, $\widehat{\pi}_2$ is superior to $\widehat{\pi}_1$ if
\begin{eqnarray}
\label{ineq_OtherCases}
&&\Esp\left[ \frac{\widehat{\pi}_1(\Xvec^\text{new})^{2-\xi}}{2-\xi} \right] + \frac{1}{\xi-1} \Esp\left[ \frac{Y^\text{new} }{\widehat{\pi}_1(\Xvec^\text{new})^{\xi-1}} \right] 
\geq \Esp\left[ \frac{\widehat{\pi}_2(\Xvec^\text{new})^{2-\xi}}{2-\xi} \right] + \frac{1}{\xi-1} \Esp\left[ \frac{Y^\text{new} }{\widehat{\pi}_2(\Xvec^\text{new})^{\xi-1}} \right]  \nonumber \\
&\Leftrightarrow& 
\Esp\left[ \frac{\widehat{\pi}_1(\Xvec^\text{new})^{2-\xi}}{2-\xi} \right] - \Esp\left[ \frac{\widehat{\pi}_2(\Xvec^\text{new})^{2-\xi}}{2-\xi} \right] + \frac{1}{\xi-1} \left(\Esp\left[ \frac{Y^\text{new} }{\widehat{\pi}_1(\Xvec^\text{new})^{\xi-1}} \right] - \Esp\left[ \frac{Y^\text{new} }{\widehat{\pi}_2(\Xvec^\text{new})^{\xi-1}} \right] \right) \geq 0. \nonumber \\
\nonumber \\
\end{eqnarray}
Whatever the predictor $\widehat{\pi}$, we can write 
\begin{eqnarray}
\Esp\left[\frac{Y^\text{new}}{\widehat{\pi}(\Xvec^\text{new})^{\xi-1}} \right] 
&=& \int_0^\infty \Esp\left[Y^\text{new} \Ind\left[ \widehat{\pi}(\Xvec^\text{new})^{1-\xi} \geq t\right] \right] \mathrm{d}t \nonumber \\
&=& \int_0^\infty \Esp\left[Y^\text{new} \Ind\left[ \widehat{\pi}(\Xvec^\text{new}) \leq \frac{1}{t^{\frac{1}{\xi-1}}}\right] \right] \mathrm{d}t \nonumber \\
&=& \int_0^\infty \Esp\big[Y^\text{new} \Ind\left[ \widehat{\pi}(\Xvec^\text{new}) \leq s\right] \big] \frac{\xi-1}{s^\xi}\mathrm{d}s, \nonumber
\end{eqnarray}
The announced result then follows from 
\begin{eqnarray}
&&\Esp\left[\frac{Y^\text{new}}{\widehat{\pi}_1(\Xvec^\text{new})^{\xi-1}} \right] -\Esp\left[\frac{Y^\text{new}}{\widehat{\pi}_2(\Xvec^\text{new})^{\xi-1}} \right] \nonumber \\
&&= \int_0^\infty \Big(\Esp\big[Y^\text{new} \Ind\left[ \widehat{\pi}_1(\Xvec^\text{new}) \leq s\right] \big]-\Esp\big[Y^\text{new} \Ind\left[ \widehat{\pi}_2(\Xvec^\text{new}) \leq s\right] \big] \Big) \frac{\xi-1}{s^\xi}\mathrm{d}s. \nonumber 
\end{eqnarray}
This ends the proof.
\end{proof}

Instead of Tweedie dominance, the actuary could select a specific parameter $\xi$, only.
This is typically the case with $\xi=1$ (Poisson regression for counts),
$\xi=2$ or $\xi=3$ (Gamma or Inverse Gaussian regression for claim severities).
In this case, condition \eqref{cond1} in Proposition \ref{PropBetterModel} is imposed
only for that specific valued of $\xi$.

The main ingredient of the proof of Proposition \ref{PropBetterModel}
is the integral of the difference of functions in \eqref{cond2} weighted by
$$
\frac{\xi-1+\Ind[\xi=1]}{s^\xi}=\frac{\xi-1+\Ind[\xi=1]}{V(s)}
$$
where $V(\cdot)$ is the variance function associated with the response distribution.
Weights thus appear to be inversely proportional to the variability.
Under the Tweedie variance function, $V(s)$ is a power of $s$ so that
smaller weights are assigned to the differences
at larger values of the response.

Proposition \ref{PropBetterModel} shows that improving a predictor $\widehat{\pi}$ can be achieved by
\begin{itemize}
\item[(i)]
increasing the overall bias measured on a modified scale induced by
the auxiliary function $\psi_\xi$ defined in \eqref{DefPsi}.
This may act against the conservation of observed totals.
\item[(ii)]
increasing the dependence between $\widehat{\pi}$ and the response,
so to decrease
\begin{eqnarray*}
\Esp\Big[Y^{\text{new}}\Ind\big[\widehat{\pi}(\Xvec^{\text{new}})\leq t\big]\Big]
&=&
\Cov\Big[Y^{\text{new}},\Ind\big[\widehat{\pi}(\Xvec^{\text{new}})\leq t\big]\Big]
+\Esp\Big[Y^{\text{new}}\Big]\Prob\Big[\widehat{\pi}(\Xvec^{\text{new}})\leq t\Big]
\\
&=&
\Esp\Big[Y^{\text{new}}\Big|\widehat{\pi}(\Xvec^{\text{new}})\leq t\Big]
\Prob\Big[\widehat{\pi}(\Xvec^{\text{new}})\leq t\Big].
\end{eqnarray*}
The latter lower partial moment essentially depends on the correlation structure of 
the pair $\big(Y^{\text{new}},\widehat{\pi}(\Xvec^{\text{new}})\big)$.
Notice that the quantities appearing in the decomposition
above are closely related to expectation dependence as defined by Wright (1987).

In an insurance ratemaking context, the lower partial moment can be interpreted as best-profile premium income
\begin{eqnarray*}
\Esp\Big[Y^{\text{new}}\Ind\big[\widehat{\pi}(\Xvec^{\text{new}})\leq t\big]\Big]
&=&
\Esp\Big[\mu(\Xvec^{\text{new}})\Ind\big[\widehat{\pi}(\Xvec^{\text{new}})\leq t\big]\Big]
\\
&\approx&
\frac{1}{n}\sum_{i|\widehat{\pi}(\Xvec_i)\leq t}\mu(\Xvec_i).
\end{eqnarray*}
Here, $\sum_{i|\widehat{\pi}(\Xvec_i)\leq t}\mu(\Xvec_i)$ is the true premium income for the
sub-portfolio formed by gathering all policyholders with predicted premium at most equal to $t$.
These policyholders exhibit the best risk profiles according to the candidate premium $\widehat{\pi}$.
\end{itemize}

Of course, there is a trade-off between these two goals when computing
$D(\xi,\widehat{\pi})$.
If the model is very flexible then it can produce a predictor that correlates a lot with the response and the
lower partial moment can be decreased to a large extent compared to models imposing a rigid form for the predictor
(like GLMs). There is in fact a fundamental trade-off between \eqref{cond1} and \eqref{cond2}. Since $\psi_\xi$ is increasing, \eqref{cond1} favors small values of
$\widehat{\pi}_2$. By contrast, \eqref{cond2} requires large values of $\widehat{\pi}_2$
so that the indicator function on the right-hand side is 0. There is thus no guarantee that balance holds on a particular data set: if one component dominates, we may end up
with candidate premiums that are either too small or too large compared to observed responses.

\begin{Example}[Poisson regression]

Poisson deviance is by far the most widely used one in insurance applications. 
It applies for instance to claim counts in property and casualty insurance, 
death counts in life insurance, and numbers of transitions in health insurance. 
We assume that we deal with a response $Y$ obeying the Poisson distribution. 
A vector $\Xvec$ of features
is available to predict the mean response $\mu(\Xvec)$. To ease the presentation, we assume unit exposures.

Considering \eqref{DefPsi}-\eqref{cond1} with $\xi=1$,
the bias is thus measured on the response scale.
Condition \eqref{cond1} then favors $\widehat{\pi}_2$ if
$$
\Esp[\mu(\Xvec^{\text{new}})]-\Esp[\widehat{\pi}_2(\Xvec^{\text{new}})]\geq\Esp[\mu(\Xvec^{\text{new}})]-\Esp[\widehat{\pi}_1(\Xvec^{\text{new}})]
$$
or, equivalently,
$$
\Esp[Y^{\text{new}}]-\Esp[\widehat{\pi}_2(\Xvec^{\text{new}})]\geq\Esp[Y^{\text{new}}]-\Esp[\widehat{\pi}_1(\Xvec^{\text{new}})].
$$
A larger bias may thus be advantageous, depending on the fundamental trade-off
between \eqref{cond1} and \eqref{cond2} explained above. As a consequence,
adopting Poisson deviance as loss function outside GLMs may create a total premium income gap
\begin{eqnarray*}
\Esp\Big[Y^{\text{new}}\Big]-\Esp\Big[\widehat{\pi}(\Xvec^{\text{new}})\Big]
&=&
\Esp\Big[\mu(\Xvec^{\text{new}})\Big]-\Esp\Big[\widehat{\pi}(\Xvec^{\text{new}})\Big]
\\
&\approx&
\frac{1}{n}\sum_{i=1}^n\mu(\Xvec_i)-\frac{1}{n}\sum_{i=1}^n\widehat{\pi}(\Xvec_i)
\end{eqnarray*}
where $\sum_{i=1}^n\mu(\Xvec_i)$ is the true premium income for the validation set
whereas $\sum_{i=1}^n\widehat{\pi}(\Xvec_i)$ is the one obtained by adopting predictor $\widehat{\pi}$
for premium calculation. This gap may become quite large with highly flexible models such as Neural Networks or boosting.

\end{Example}

\section{Restoring balance at global and local scales}
\label{SecAutoCal}

\subsection{Autocalibration}

Recall that a predictor $\widehat{\pi}$ is said to be autocalibrated if 
$\widehat{\pi}(\Xvec)=\Esp[Y|\widehat{\pi}(\Xvec)]$. We refer the reader to
Kruger and Ziegel (2020) for a general presentation of this concept. By Jensen inequality,
autocalibration thus ensures that
$$
\Esp[g(\widehat{\pi}(\Xvec))]\leq\Esp[g(Y)]
\text{ for every convex function }g,
$$
or equivalently, that
$$
\Esp[\widehat{\pi}(\Xvec)]=\Esp[Y]\text{ and }
\Esp[(\widehat{\pi}(\Xvec)-t)_+]\leq\Esp[(Y-t)_+]\text{ for all }t\geq 0.
$$
These inequalities correspond to the convex order between $\widehat{\pi}(\Xvec)$
and $Y$. Thus, autocalibration implies that the predictor is less variable than
the response, in the sense of the convex order. 

Under mild technical requirement,
a simple way to restore global balance consists in switching from $\widehat{\pi}$ to its
balance-corrected version $\widehat{\pi}_{\text{BC}}$ defined as
$$
\widehat{\pi}_{\text{BC}}(\Xvec)=\Esp[Y|\widehat{\pi}(\Xvec)]
$$
that averages to $\Esp[Y]$, as shown in the next result.

\begin{Property}
If $s\mapsto \Esp[Y|\widehat{\pi}(\Xvec)=s]$ is continuously increasing then
the balance-corrected version $\widehat{\pi}_{\mathrm{BC}}$ of the candidate premium $\widehat{\pi}$
satisfies the autocalibration property.
\end{Property}
\begin{proof}
If $s\mapsto \Esp[Y|\widehat{\pi}(\Xvec)=s]$ is continuously increasing, that is,
if $\widehat{\pi}_{\text{BC}}(\Xvec)$ is continuously increasing in $\widehat{\pi}(\Xvec)$, then
Lemma 2.2 in Shaked et al. (2012) allows us to write
$$
\Esp[Y|\widehat{\pi}(\Xvec)]=\Esp[Y|\widehat{\pi}_{\text{BC}}(\Xvec)]
=\widehat{\pi}_{\text{BC}}(\Xvec)
$$
so that the resulting $\widehat{\pi}_{\text{BC}}(\Xvec)$ is indeed autocalibrated.
This ends the proof.
\end{proof}

In insurance applications, autocalibration
induces local balance and imposes financial equilibrium not only at portfolio level but also
in any sufficiently large sub-portfolio. This concept thus appears to be particularly appealing
in a ratemaking context.

\subsection{Autocalibrating a given predictor}

We can restore global balance, or unbias
the predictor by reconciling the predicted and observed total on the training set. In this way,
we recover the global balance property imposed after the seminal work by Bailey and Simon (1960).
However, this does not ensure that balance holds locally. Indeed, global balance is only one
of the GLM likelihood equations under canonical link, corresponding to the intercept. 
In order to extend the other
likelihood equations imposed in the GLM setting to general machine learning procedures,
we need to mimic the way local GLM proceeds for fitting, by defining meaningful neighborhoods for
statistical learning. 

To this end, we work under canonical link function with data points $(Y_i,e_i,\widehat{\pi}(\xvec_i))$ for some
relevant exposure $e_i$. An intuitively acceptable solution would consist
in imposing marginal constrains on local neighborhoods defined by mean of $\widehat{\pi}$.
This allows for some local transfers of claims and premiums from
neighboring policyholders and so implements local balance conditions in sub-portfolios 
corresponding to these neighborhoods. This is in essence
the local GLM approach (see Loader, 1999, for a detailed account) that allows 
the actuary to maintain the relationship with MMT.

In order to obtain an autocalibrated version $\widehat{\pi}_{\text{BC}}$ of $\widehat{\pi}$,
let us consider a specific risk profile $\xvec$.
A weight $\nu_i(\widehat{\pi}(\xvec))$ is assigned to each $(Y_i,e_i,\widehat{\pi}(\xvec_i))$,
$i=1,\ldots,n$, computed from some weight function $\nu(\cdot)$ chosen to be continuous, 
symmetric, peaked at 0 and defined on $[-1,1]$.
These weights depend on the relative distance of $\widehat{\pi}(\xvec_i)$ with respect
to $\widehat{\pi}(\xvec)$.
A common choice for $\nu(\cdot)$ is the tricube weight function but several alternatives are available, 
including rectangular (or uniform), Gaussian or Epanechnikov, for instance. 
Here, $\nu_i(\widehat{\pi}(\xvec))$ is larger for policyholders $i$ such that
$\widehat{\pi}(\xvec_i)$ is close to $\widehat{\pi}(\xvec)$ and decreases when
$\widehat{\pi}(\xvec_i)$ gets far away from $\widehat{\pi}(\xvec)$.

The local GLM likelihood equation
$$
\sum_{i=1}^n\nu_i(\widehat{\pi}(\xvec))y_i
=\sum_{i=1}^n\nu_i(\widehat{\pi}(\xvec))e_i\widehat{\pi}_{\text{BC}}(\xvec)
$$
thus matches MMT constraints:
smoothing is ensured by transferring part of the experience at neighboring $\widehat{\pi}$
values to obtain $\widehat{\pi}_{\text{BC}}$.
A local constant, or intercept-only GLM thus provides the appropriate fitting procedure
when balance must be respected.
Opting for a rectangular weight function complies with MMT: the weights $\nu_i$
are constant within the smoothing window and zero otherwise so that the sums reduce
to observations comprised within this window and the uniform weights factor out.

Our main message here is thus that a local, intercept-only GLM with rectangular weights implements local balance,
or MMT in a second step, within sub-portfolios gathering policyholders with about the same predicted value according to the first step. 
The rectangular weight function involved in the statistical
procedure optimally transfers part of claim experience between neighboring policyholders. This approach implements
smoothness from a statistical point of view while remaining fully transparent and understandable.
Indeed, local averaging can just be seen as an application of the mutuality principle at the heart of insurance.

\section{Numerical illustrations}

In this section, we first
consider the \texttt{freMTPL2freq} dataset, from the \texttt{CASDataset} R package of Charpentier (2014). The variable of interest is annual claim frequency. Precisely, we run Poisson regression on response \texttt{ClaimNb}, with exposure \texttt{Exposure} and various explanatory features (continuous and categorical). Then we use the \texttt{freMTPL2sev} dataset, with the severity, and use a Tweedie model to get an estimation of total annual losses.

\subsection{Claim frequency}

Four models are considered : a generalized linear model $\widehat{\pi}^{\text{\sffamily glm}}$, a generalized additive model $\widehat{\pi}^{\text{\sffamily gam}}$ where continuous features are transformed nonlinearly using spline functions, a boosting algorithm $\widehat{\pi}^{\text{\sffamily bst}}$ and a Neural Network model $\widehat{\pi}^{\text{\sffamily nn}}$. For the boosting models, we use the \texttt{h2o} package, with \texttt{h2o.gbm}. For the Neural Network approach, we use combined actuarial neural net (CANN\footnote{see {\texttt{https://www.kaggle.com/floser/glm-neural-nets-and-xgboost-for-insurance-pricing}} for a discussion and the codes used here.}), from Schelldorfer and W\"uthrich (2019), using Keras. Notice that in the latter case, even with the same seed, outputs of the Neural Nets procedure change.

In order to model our counting variable, we set \texttt{distribution = "poisson"} and \texttt{offset\_column = "Exposure"}. For the boosting inference procedure, we use \texttt{ntrees = 30} (the impact of the number of iterations will be discussed later on) and \texttt{nfolds = 5}\footnote{The codes used in this section can be found on the github repository \\
\texttt{https://github.com/freakonometrics/autocalibration}}. From the initial dataset, with 678,013 rows, 70\% are randomly chosen for our training dataset (474,609 rows) and used to construct $\widehat{\pi}$, and the remaining 30\% (203,404 rows) are used as a validation dataset, to draw the following graphs.

On Figure \ref{fig:CASDataset:hist:1}, we can visualise the distribution of predictions $\widehat{\pi}(\boldsymbol{x}_i)$. The average value $\overline{\pi}$ for the four models, on the training dataset, is given on the left of Table \ref{tab:CASDataset:mean} (additional information is also given there, namely quantiles). As a benchmark, if we run a simple Poisson regression of the intercept only, we obtain $\widehat{\beta}_0=-2.2911$, corresponding to a baseline prediction $\overline{\pi}=0.10115$. Observe that the boosting procedure globally underestimates claims frequency, while neural nets globally overestimate.

\begin{figure}[!h]
    \centering
    \includegraphics[width=\textwidth]{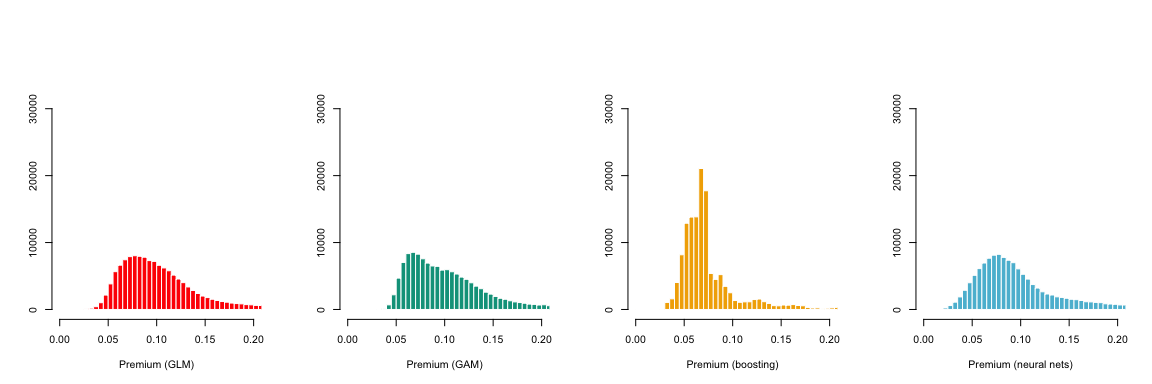}
    \caption{Histogram of $\{\widehat{\pi}(\boldsymbol{x}_1),\cdots,\widehat{\pi}(\boldsymbol{x}_n)\}$ on the validation dataset. In this section, \includegraphics[width=.2cm]{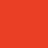} corresponds to the standard Poisson regression (GLM), \includegraphics[width=.2cm]{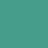} corresponds to the smooth additive regression (GAM), \includegraphics[width=.2cm]{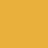} is the boosting model and \includegraphics[width=.2cm]{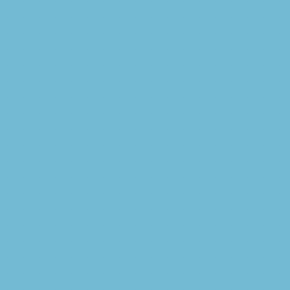} is the Neural Network.}
    \label{fig:CASDataset:hist:1}
\end{figure}

\begin{table}[!h]
    \centering
    \begin{tabular}{|c|cccc|}\hline
         &  $\widehat{\pi}^{\text{\sffamily glm}}$ & $\widehat{\pi}^{\text{\sffamily gam}}$ & $\widehat{\pi}^{\text{\sffamily bst}}$ & $\widehat{\pi}^{\text{\sffamily nn}}$  \\\hline
    average  $\overline{\pi}$   & 0.1091 & 0.1091 & 0.0821 & 0.1230 \\
    10\% quantile & 0.0602 &0.0592&0.0494 & 0.0529\\
    90\% quantile & 0.1688 & 0.1729 & 0.1264 & 0.2051 \\\hline
    \end{tabular}~~~\begin{tabular}{|c|cccc|}\hline
         &  $\widehat{\pi}_{BC}^{\text{\sffamily glm}}$ & $\widehat{\pi}_{BC}^{\text{\sffamily gam}}$ & $\widehat{\pi}_{BC}^{\text{\sffamily bst}}$& $\widehat{\pi}_{BC}^{\text{\sffamily nn}}$  \\\hline
    average  $\overline{\pi}$   & 0.1051 & 0.1059 & 0.1028 & 0.1051 \\
    10\% quantile & 0.0573 &0.0570&0.0518 & 0.0499\\
    90\% quantile & 0.1687 & 0.1806 & 0.1711 & 0.1776  \\\hline
    \end{tabular}
    \caption{Summary statistics on $\{\widehat{\pi}(\boldsymbol{x}_1),\cdots,\widehat{\pi}(\boldsymbol{x}_n)\}$, on the validation dataset (assuming an exposure of $1$ to provide annualized predictions), for $\pi$ on the left, and the corrected version $\pi_{BC}$ on the right.}
    \label{tab:CASDataset:mean}
\end{table}

Figure \ref{fig:CASDataset:2} displays $s\mapsto \Esp[Y|\widehat{\pi}(\boldsymbol{X})=s]$ when $s\in[0,0.2]$. For the Generalized Linear Model $\Esp[Y|\widehat{\pi}(\boldsymbol{X})=s]\sim s$, while $\Esp[Y|\widehat{\pi}(\boldsymbol{X})=s]> s$ for the boosting model. 
The positive bias we observe on $\widehat{\pi}^{\text{\sffamily bst}}$ means that this model {\em underestimates} the true price of the risk almost everywhere; similarly, the negative bias we observe on $\widehat{\pi}^{\text{\sffamily nn}}$ almost eveywhere means that this model {\em overestimates} the true price. As shown on Table \ref{tab:CASDataset:mean}, the bias correction step solves this issue.

\begin{figure}[!h]
    \centering
    \includegraphics[width=\textwidth]{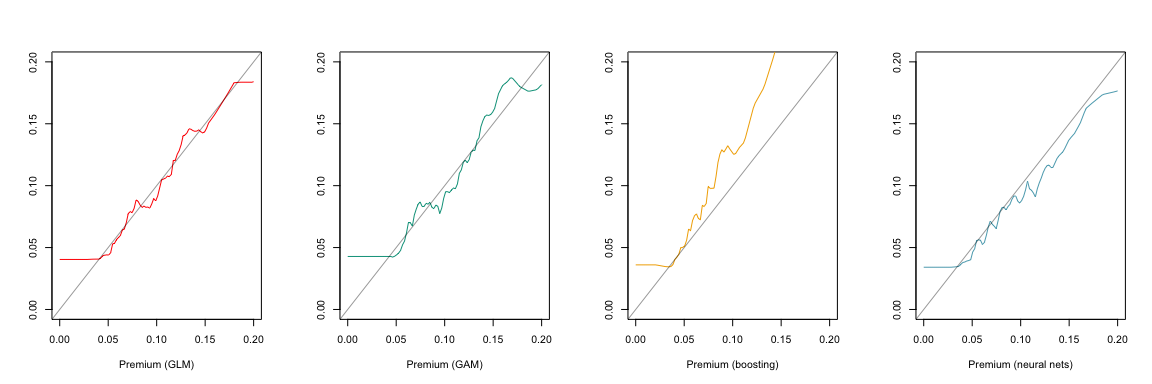}
    \caption{Evolution of $s\mapsto \Esp[Y|\widehat{\pi}(\boldsymbol{X})=s]$ (the thin straight line corresponds to $\mu=\widehat{\pi}$). }
    \label{fig:CASDataset:2}
\end{figure}

On Figure \ref{fig:CASDataset:pp}, we can compare $\widehat{\pi}$ and $\widehat{\pi}_{\text{BC}}$: on top, we have the QQ plot of $\widehat{\pi}_{\text{BC}}$ against $\widehat{\pi}$
and on the bottom, a scatterplot of $\{\widehat{\pi}(\boldsymbol{x}_i),\widehat{\pi}_{\text{BC}}(\boldsymbol{x}_i)\}$ (for a subset of the validation database). The distribution of $\widehat{\pi}_{\text{BC}}$ can be visualized on Figure \ref{fig:CASDataset:hist:2}.

\begin{figure}[!h]
    \centering
    \includegraphics[width=\textwidth]{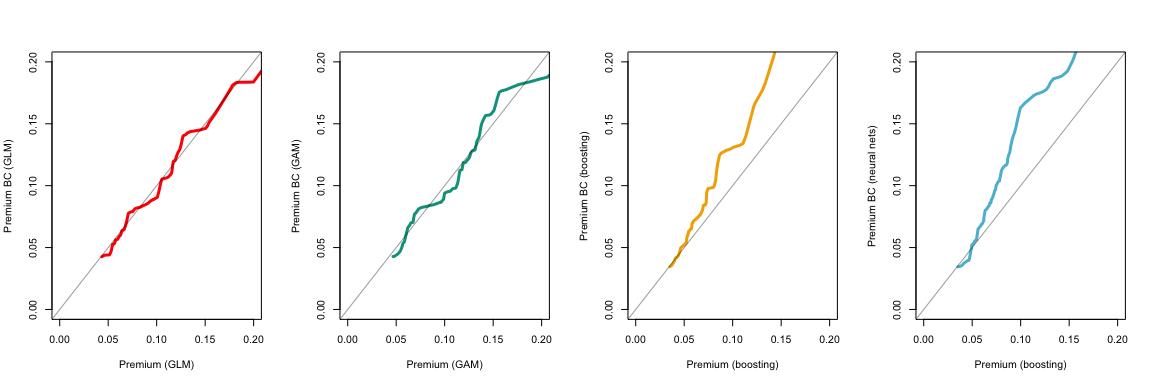}
    \includegraphics[width=\textwidth]{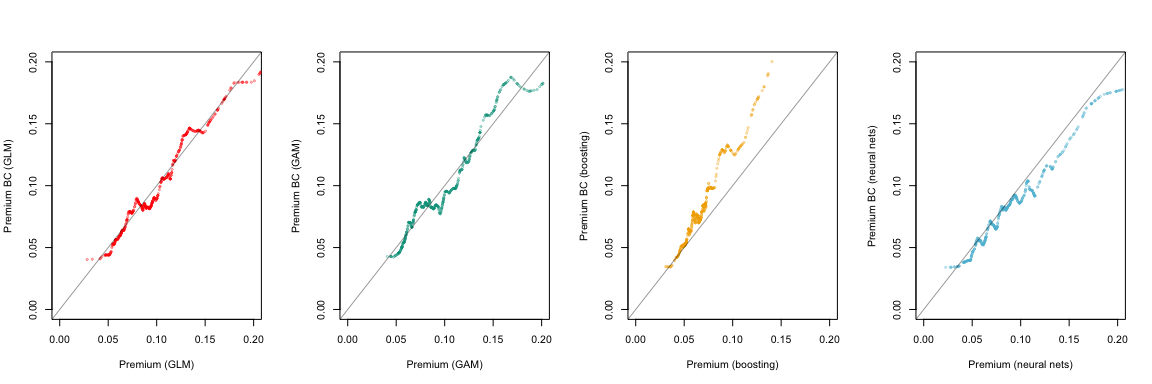}
    \caption{QQ plot of $\widehat{\pi}_{\text{BC}}$ against $\widehat{\pi}$ (plain line), on top, and scatterplot a subset of points $\{\widehat{\pi}(\boldsymbol{x}_i),\widehat{\pi}_{\text{BC}}(\boldsymbol{x}_i)\}$ from the validation dataset, below.}
    \label{fig:CASDataset:pp}
\end{figure}

\begin{figure}[!h]
    \centering
    \includegraphics[width=\textwidth]{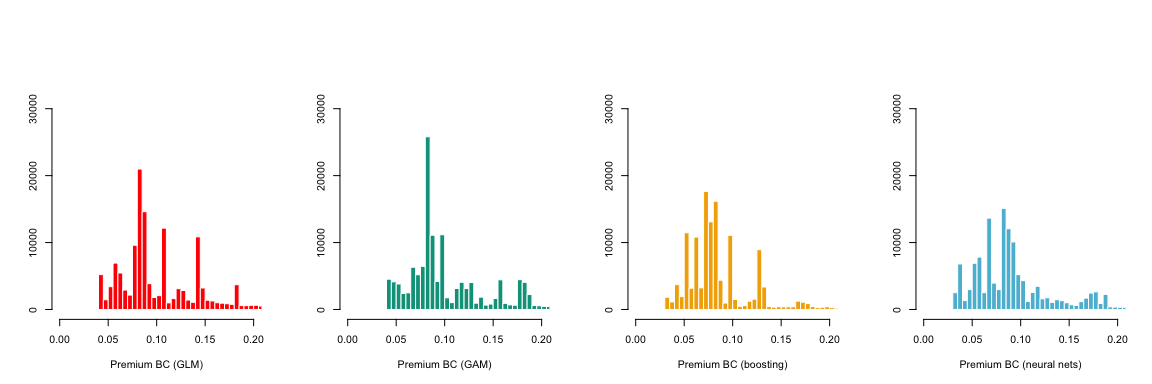}
    \caption{Histogram of $\{\widehat{\pi}_{\text{BC}}(\boldsymbol{x}_1),\cdots,\widehat{\pi}_{\text{BC}}(\boldsymbol{x}_n)\}$ on the validation dataset.}
    \label{fig:CASDataset:hist:2}
\end{figure}

The specification of bandwidth $h(s)$ is discussed in Loader (1999, Section 2.2.1). For a nearest neighbor bandwidth, compute all distances $|s-s_i|$ between the fitting point $s$ and the data points $s_i$, we choose $h(x)$ to be the $k$th smallest distance, where $k=[n\alpha]$. In the \texttt{R} function \texttt{locfit}, when \texttt{alpha} is given as a single number, it represents a nearest neighbor fraction (the default smoothing parameter is $\alpha=70\%$). But a second component can be added, $\alpha=(\alpha_0,\alpha_1)$. That second component represents a constant bandwidth, and $h(s)$ will be computed as follows: as previously, $k=[n\alpha_0]$, and if $d_{(i)}$ represents the ordered statistics of $d_i=|s-s_i|$, $h(s)=\max\{d_{(k)},\alpha_1\}$. The default value in R is \texttt{alpha=c(0.7,0)}.

As we can see on Table \ref{tab:CASDataset:mean}, the boosting algorithm was globally underestimating the average premium. Using a local regression can actually lower the bias. On Figure \ref{fig:CASDataset:alpha:evol:boost}, we used 60\% of the original dataset to train our three models (training dataset), then the local regression was performed on 20\% of the dataset (smoothing dataset) and finally various quantities (bias and empirical loss) were computed on the remaining 20\% (validation dataset). On the left, we can visualize the evolution of the bias $\displaystyle{\frac{1}{n}\sum_{i=1}^n(\widehat{\pi}^{\text{\sffamily bst}}_{\text{BC}}(\boldsymbol{x}_i)-y_i)}$ as a function of $\alpha$. Here, smoothing has no impact on the overall bias of GLM and GAM (here the two models have a (small) positive bias on the validation dataset). Smoothing has an impact on the correction of $\widehat{\pi}^{\text{\sffamily bst}}$. We can observe that a small $\alpha$ can lead to a much smaller bias (possibly null). Note that we used $\alpha=5\%$ for previous graphs of this section. In the middle of Figure \ref{fig:CASDataset:alpha:evol:boost}, we can visualize the evolution of the empirical Poisson loss $\displaystyle{\frac{1}{n}\sum_{i=1}^n(\widehat{\pi}^{\text{\sffamily bst}}_{\text{BC}}(\boldsymbol{x}_i)-y_i\ln\widehat{\pi}^{\text{\sffamily bst}}_{\text{BC}}(\boldsymbol{x}_i) )}$. Note that again, a small $\alpha$ leads to a smaller loss. It is also interesting to see that similar performances are obtained in case
of overfitting. Thus, autocalibration also corrects for overfitting by locally averaging the initial predictors.

\begin{figure}[!h]
    \centering
    \includegraphics[width=\textwidth]{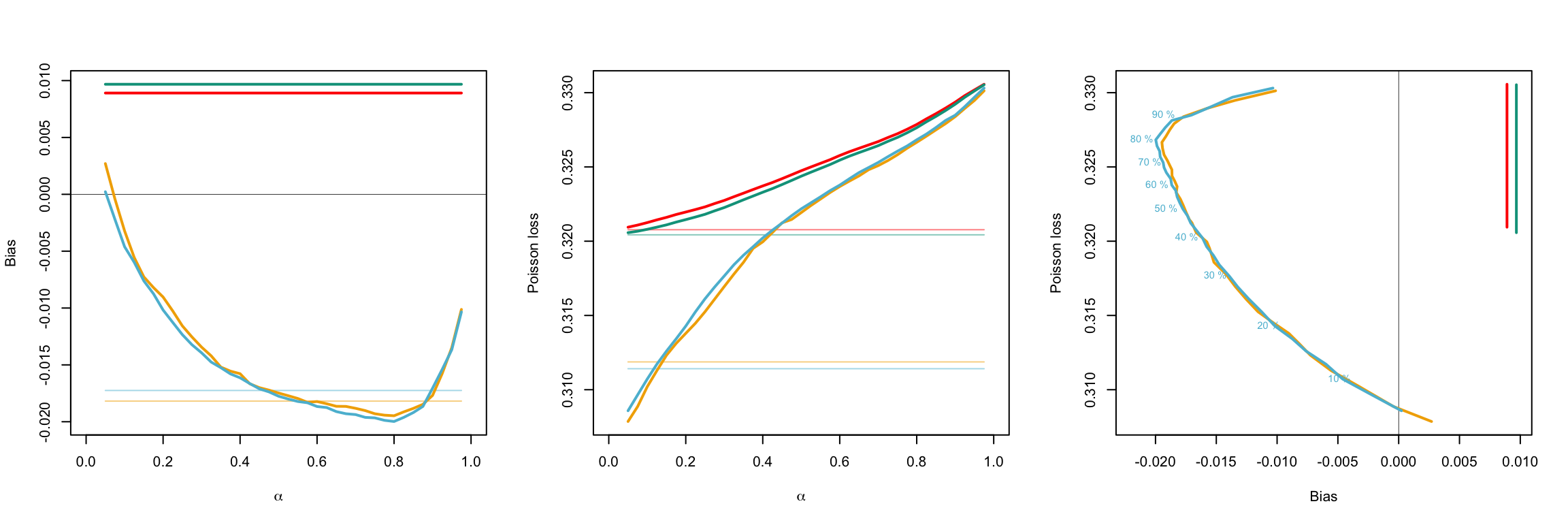}
    \caption{Evolution of the bias as a function a $\alpha$, from $2.5\%$ to $97.5\%$, on the left, of the Poisson loss in the center, and joint scatterplot of bias and empirical loss on the right. Horizontal light lines correspond to original estimates $\widehat{\pi}$ while strong curves are $\widehat{\pi}_{\text{BC}}$ for various smoothing parameter $\alpha$. As previously \includegraphics[width=.2cm]{figures/couleur-rouge.png} corresponds to the standard Poisson regression (GLM), \includegraphics[width=.2cm]{figures/couleur-vert.png} corresponds to the smooth additive regression (GAM), while \includegraphics[width=.2cm]{figures/couleur-jaune.png} is the boosting model with 30 trees (as previously) and \includegraphics[width=.2cm]{figures/couleur-bleu.png} is the boosting model with 1000 trees (overfitting).}
    \label{fig:CASDataset:alpha:evol:boost}
\end{figure}


As we have seen when moving from $\widehat{\pi}$ to $\widehat{\pi}_{\text{BC}}$ the distribution of premiums changed substantially (see Figures \ref{fig:CASDataset:hist:1} and \ref{fig:CASDataset:hist:2}) and the transformation was not (perfectly) monotonic (see Figure \ref{fig:CASDataset:pp}). Nevertheless, the rank correlation between $\widehat{\pi}$ and $\widehat{\pi}_{\text{BC}}$ is rather high (0.99 for all models), as reported in Figure \ref{fig:CASDataset:correl}.

\begin{figure}[!h]
    \centering
    \includegraphics[width=.5\textwidth]{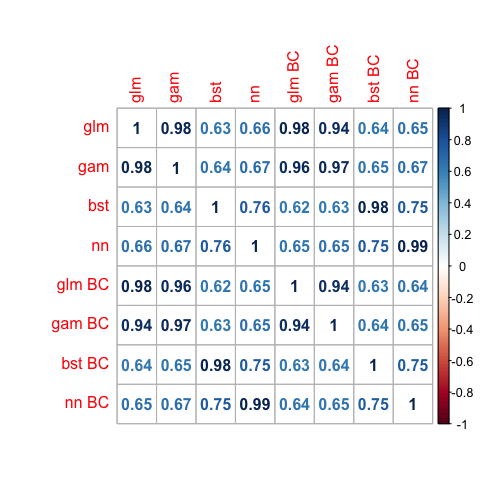}
    \caption{Spearman's rank correlation between $\widehat{\pi}^{\text{\sffamily glm}}$, $\widehat{\pi}^{\text{\sffamily gam}}$, $\widehat{\pi}^{\text{\sffamily bst}}$, $\widehat{\pi}^{\text{\sffamily nn}}$ and their autocalibrated counterparts, $\widehat{\pi}_{\text{BC}}^{\text{\sffamily glm}}$, $\widehat{\pi}_{\text{BC}}^{\text{\sffamily gam}}$, $\widehat{\pi}_{\text{BC}}^{\text{\sffamily bst}}$ and $\widehat{\pi}_{\text{BC}}^{\text{\sffamily nn}}$, on the validation dataset}
    \label{fig:CASDataset:correl}
\end{figure}

Finally, on Figure \ref{fig:CASDataset:CC:1} we can visualize the concentration curves of the three models, against $\widehat{\pi}^{\text{\sffamily gam}}_{\text{BC}}$.
We refer the reader to Denuit et al. (2019) for a thorough presentation of this diagnostic tool. Concentration curves clearly demonstrate here the improved performances of the autocalibrated version of the predictor.

\begin{figure}[!h]
    \centering
    \includegraphics[scale=0.2]{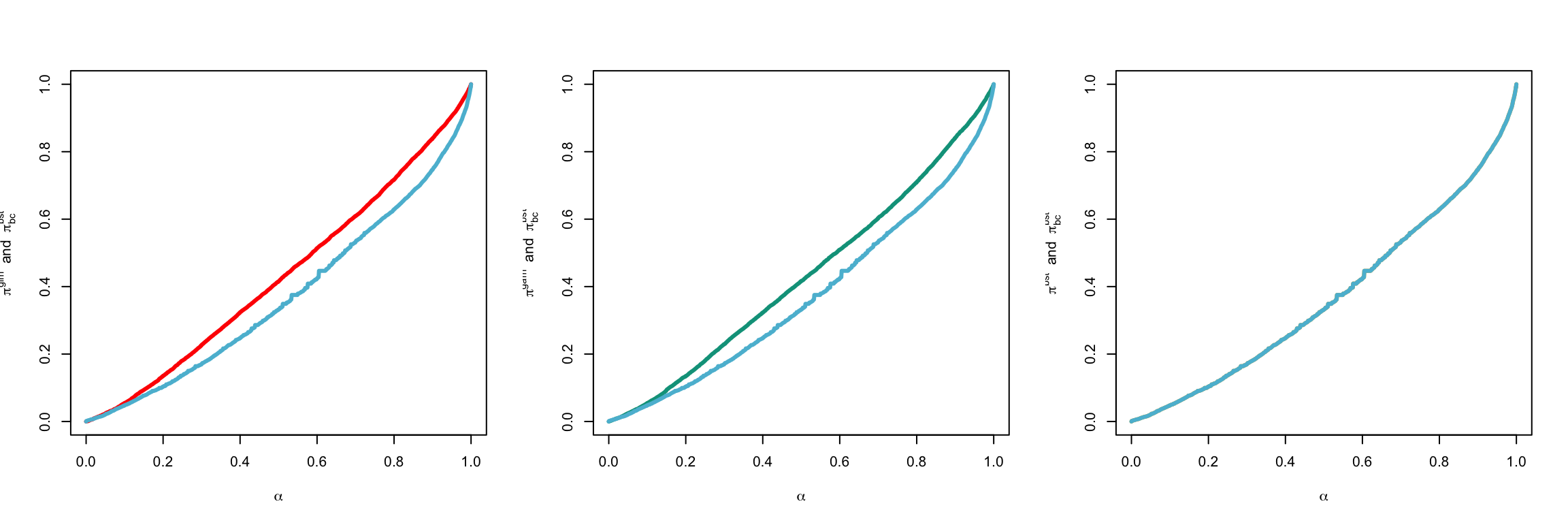}
    
    \includegraphics[scale=0.2]{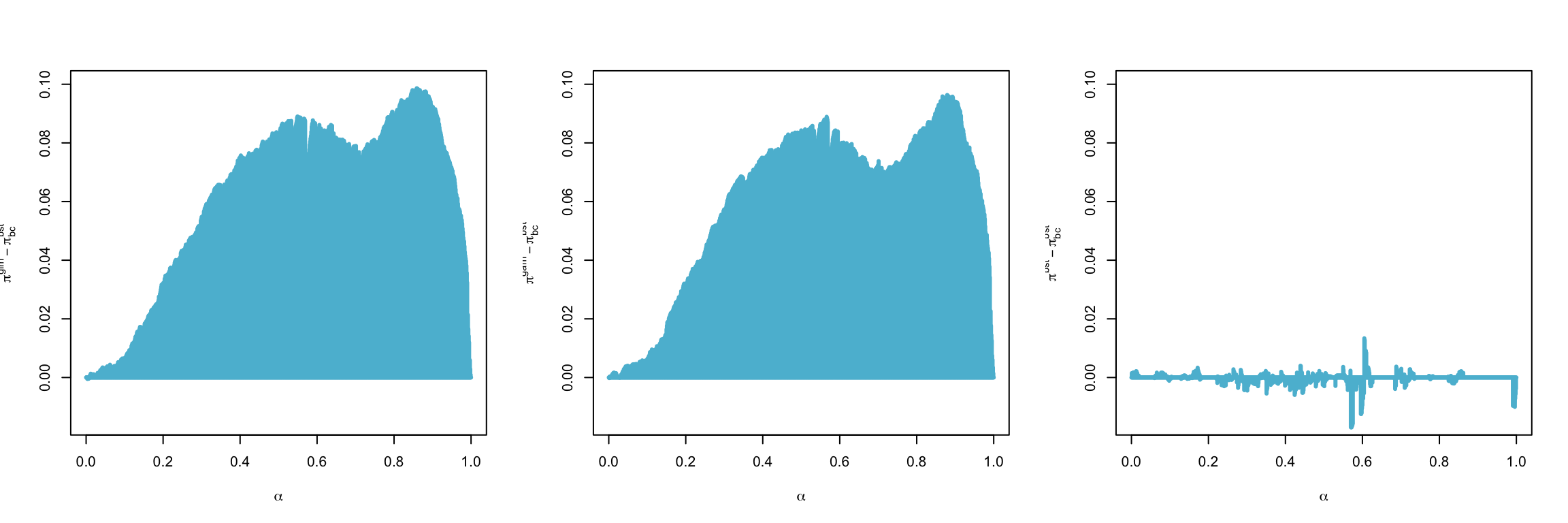}
    \caption{Concentration curves, with $\widehat{\pi}^{\text{\sffamily glm}}$, $\widehat{\pi}^{\text{\sffamily gam}}$, $\widehat{\pi}^{\text{\sffamily bst}}$, against $\widehat{\pi}_{\text{BC}}^{\text{\sffamily bst}}$, from the left to the right on top and $\widehat{\pi}^{\text{\sffamily glm}}-\widehat{\pi}_{\text{BC}}^{\text{\sffamily bst}}$, $\widehat{\pi}^{\text{\sffamily gam}}-\widehat{\pi}_{\text{BC}}^{\text{\sffamily bst}}$ and $\widehat{\pi}^{\text{\sffamily bst}}-\widehat{\pi}_{\text{BC}}^{\text{\sffamily bst}}$ from the left to the right, on the bottom.}
    \label{fig:CASDataset:CC:1}
\end{figure}

\subsection{Claims totals}

Let us now consider aggregate losses and Tweedie deviance with $\xi\in(1,2)$. Three models are considered: a generalized linear model $\widehat{\pi}^{\text{\sffamily glm}}$, a generalized additive model $\widehat{\pi}^{\text{\sffamily gam}}$ where continuous features are transformed nonlinearly using spline functions, and a boosting algorithm $\widehat{\pi}^{\text{\sffamily bst}}$. For the boosting models, we used the \texttt{TDboost} package. We did remove all observations with an exposure not close to 1, to avoid major corrections when the exposure was too small (and additional uncertainty).

On Figure \ref{fig:CASDataset:hist:1:tw}, we can visualise the distribution of predictions $\widehat{\pi}(\boldsymbol{x}_i)$, with the three models.
A grid search lead to $\widehat{\xi}=1.6$.
On Figure \ref{fig:CASDataset:2:tw}, we can visualize the evolution of $s\mapsto \Esp[Y|\widehat{\pi}(\boldsymbol{X})=s]$ when $s$ varies between 0 and 300. Observe that 300 is about 4 times the average value on the training dataset ($\overline{y}=84.9$).

\begin{figure}[!h]
    \centering
    \includegraphics[width=\textwidth]{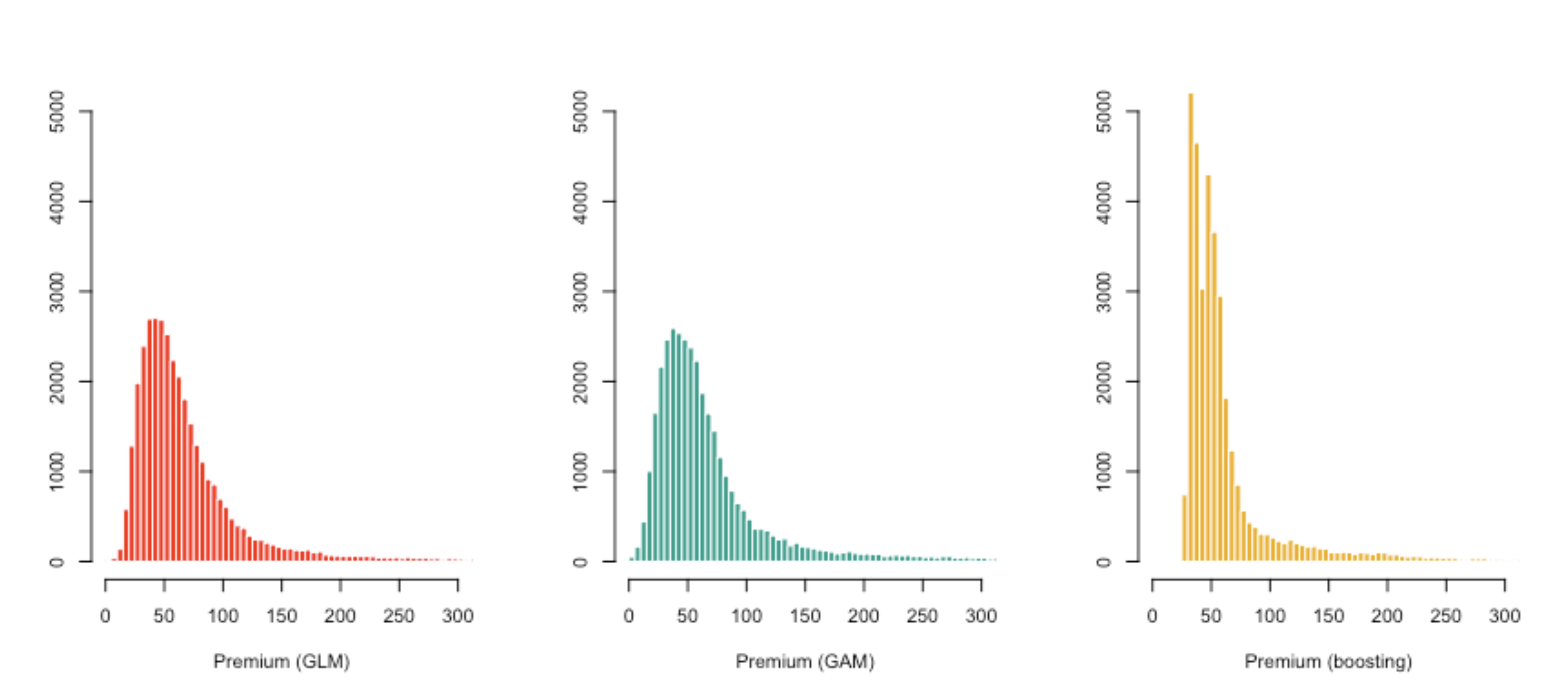}
    \caption{Histogram of $\{\widehat{\pi}(\boldsymbol{x}_1),\cdots,\widehat{\pi}(\boldsymbol{x}_n)\}$ on the validation dataset. In this section, \includegraphics[width=.2cm]{figures/couleur-rouge.png} corresponds to the standard Poisson regression (GLM), \includegraphics[width=.2cm]{figures/couleur-vert.png} corresponds to the smooth additive regression (GAM), and \includegraphics[width=.2cm]{figures/couleur-jaune.png} is the boosting model, with a Tweedie loss function.}
    \label{fig:CASDataset:hist:1:tw}
\end{figure}

\begin{figure}[!h]
    \centering
    \includegraphics[width=\textwidth]{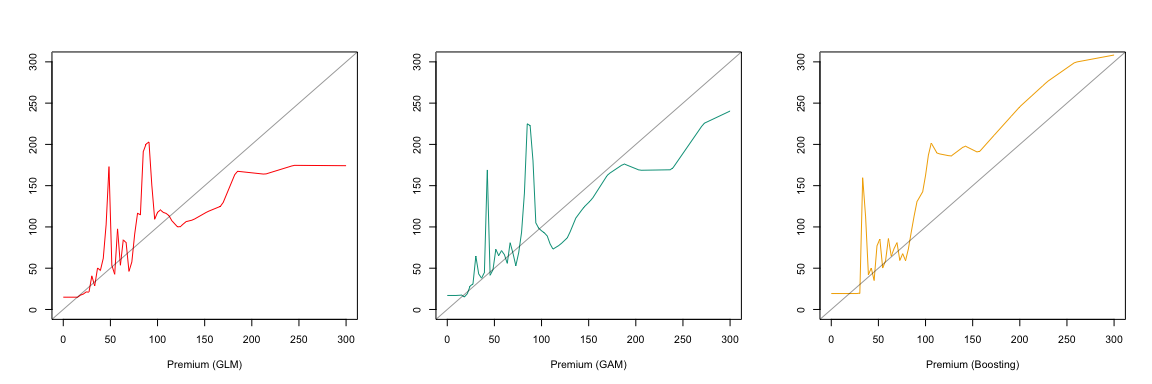}
    \caption{Evolution of $s\mapsto \Esp[Y|\widehat{\pi}(\boldsymbol{X})=s]$ (the thin straight line corresponds to $\mu=\widehat{\pi}$). }
    \label{fig:CASDataset:2:tw}
\end{figure}

On Figure \ref{fig:CASDataset:3:tw}, we illustrate the impact of an underestimation of the power index, with respectively 1.5 and 1.4 (instead of 1.6) for the GLM and GAM procedures. We can see there that the results remain stable when the value of $\xi$ changes. On Figure \ref{fig:CASDataset:4:tw} we exclude policies with an annual loss exceeding 10,000. Note that those represent 0.0539\% of the training dataset (10,000 is the 99\% quantile of losses, for policies experiencing a loss).

\begin{figure}[!h]
    \centering
    \includegraphics[width=\textwidth]{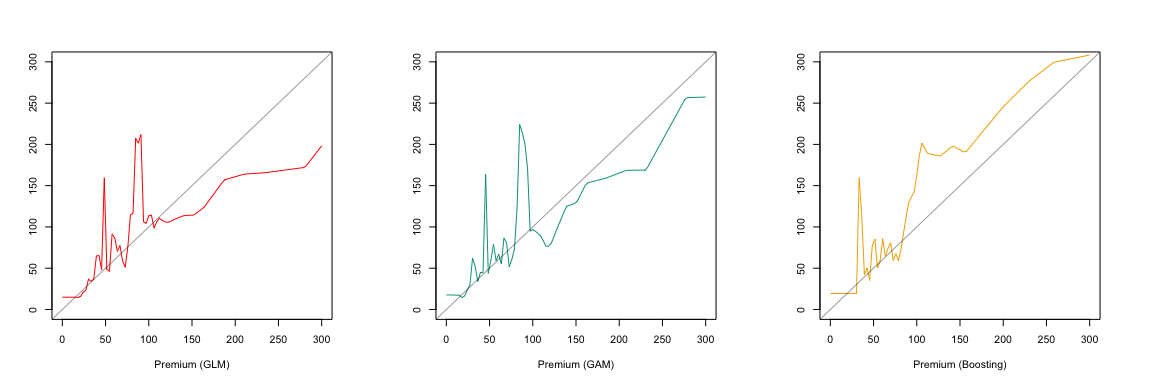}
    \includegraphics[width=\textwidth]{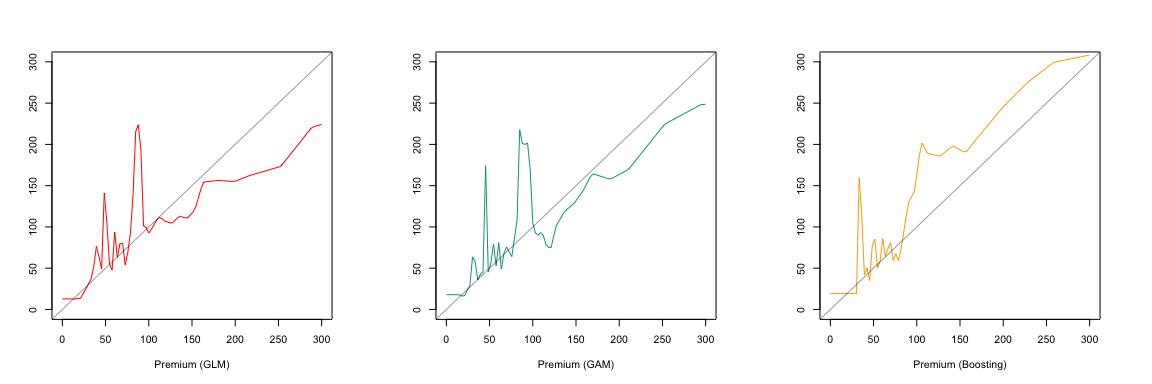}
    \caption{Evolution of $s\mapsto \Esp[Y|\widehat{\pi}(\boldsymbol{X})=s]$, when the Tweedie power is 1.5 on top, and 1.4 below. }
    \label{fig:CASDataset:3:tw}
\end{figure}


\begin{figure}[!h]
    \centering
    \includegraphics[width=\textwidth]{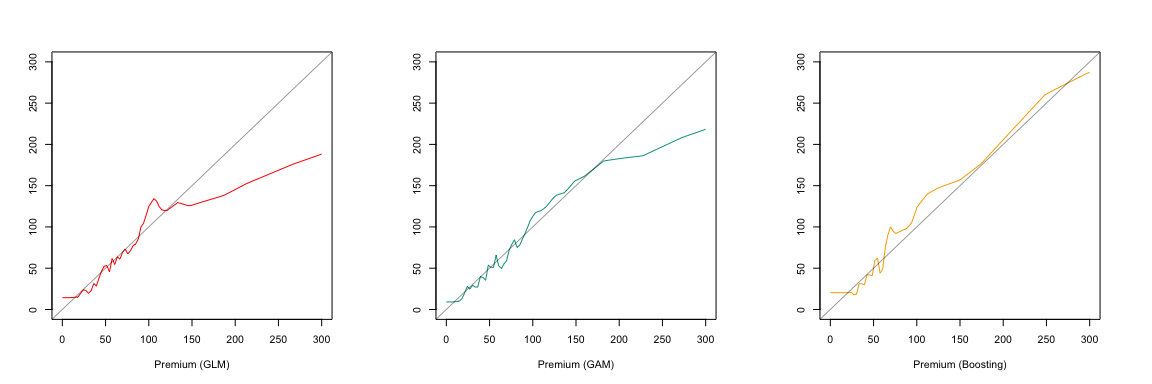}
    \caption{Evolution of $s\mapsto \Esp[Y|\widehat{\pi}(\boldsymbol{X})=s]$, when online policies with claims below 10,000 are kept in the training dataset.}
    \label{fig:CASDataset:4:tw}
\end{figure}

\section{Discussion}

The main message of this paper is as follows. 
Advanced learning models are able to produce scores that better correlate
with the response, as well as with the true premium compared to classical GLMs. 
This comes from the additional freedom obtained by letting scores to
depend in a flexible way of available features, not only linearly. But breaking the overall balance is
the price to pay for this higher correlation. 
Because no constraint on the replication of the observed total, or global balance is imposed, 
machine learning tools are also able to substantially increase overall bias.

To prevent this to occur,
the balance-corrected version of any predictor can be obtained by local GLM, recognizing the
nature of the response $Y$: local Poisson GLM for claim counts,
local Gamma GLM for average claim severities and compound Poisson sums with Gamma-distributed
terms for claim totals.  
The canonical link function is adopted so that maximum-likelihood estimates replicate observed totals.
An intercept-only GLM is fitted locally, with rectangular weight function, on a reduced set of observations
consisting in observed responses and candidate premiums,
with proximity assessed with the help of the candidate premium to be corrected.

In this approach, the supervised learning model is used to produce a real-valued 
signal $\widehat{\pi}(\Xvec)$, reducing the high-dimensional
feature space to the real line. In fact, the score of the model is enough to 
compute $\widehat{\pi}_{\text{BC}}$ so that
the learning model is essentially used to assess proximity among individuals, before performing local averaging.
In that respect, the proposed approach shares some similarities with $k$-NN, or $k$ nearest-neighbors except
that here, proximity is assessed with the help of the real-valued $\widehat{\pi}$ produced by the learning model.
And the proposed extra autocalibration step also counteracts possible overfitting by locally
averaging the initial predictor.

The approach proposed in this paper reconciles minimum bias and method of marginal totals, at the origin of insurance
risk classification, with modern learning tools. With minimum bias, the amount of premium is computed in order to compensate 
insurance risks within meaningful sub-portfolios, maintaining an overall balance.
The proposed balance correction mechanism implements the very same idea, using machine learning
tools to define meaningful neighborhoods where collected premiums must match claims to be compensated. 
This is in accordance with the fundamental mutuality principle at the heart of insurance.
Autocalibration applies very generally, to any machine learning model and only requires a statistical smoother like locfit.

\section*{Acknowledgements}

The authors thank two anonymous Referees and the Editor for their constructive comments which greatly helped to improve this paper.

\section*{References}

\begin{enumerate}
\item[-]
Bailey, R.A. (1963). 
Insurance rates with minimum bias. 
Proceedings of the Casualty Actuarial Society 50, 4-11.

\item[-]
Bailey, R.A., Simon, L.J. (1960). 
Two studies on automobile insurance ratemaking. 
ASTIN Bulletin 1, 192-217.
\item[-] Charpentier, A. (2014). Computational Actuarial Science with R. Chapman and Hall / CRC.  

\item[-]
Delong, L., Lindholm, M., W\"uthrich, M. V. (2021). 
Making Tweedie's compound Poisson model more accessible. 
European Actuarial Journal, in press.

\item[-]
Denuit, M., Dhaene, J., Goovaerts, M.J., Kaas, R. (2005).
Actuarial Theory for Dependent Risks: Measures, Orders and Models. 
Wiley, New York.


\item[-]
Denuit, M., Sznajder, D., Trufin, J. (2019). 
Model selection based on Lorenz and concentration curves, Gini indices and convex order. 
Insurance: Mathematics and Economics 89, 128-139.





\item[-]
Kruger, F., Ziegel, J.F. (2020).
Generic conditions for forecast dominance.
Journal of Business \& Economic Statistics, in press.

\item[-]
Loader, C. (1999). 
Local Regression and Likelihood.
Springer, New York.


\item[-]
Mildenhall, S.J. (1999). 
A systematic relationship between minimum bias and generalized linear models. 
Proceedings of the Casualty Actuarial Society 86, 393-487.



\item[-]
Savage, L.J. (1971). 
Elicitation of personal probabilities and expectations. 
Journal of the American Statistical Association 66, 783-810.

\item[-]
Shaked, M., Shanthikumar, J.G. (2007).
Stochastic Orders.
Springer, New York.

\item[-]
Shaked, M., Sordo, M. A., Suarez-Llorens, A. (2012). 
Global dependence stochastic orders. 
Methodology and Computing in Applied Probability 14, 617-648.

\item[-]
Schelldorfer, J., W\"uthrich, M.V. (2019).
Nesting classical actuarial models into Neural Networks. Available at SSRN: https://ssrn.com/abstract=3320525

\item[-]
Wright, R. (1987).
Expectation dependence of random variables, with an application in portfolio theory.
Theory and Decision 22, 111-124.

\item[-]
W\"uthrich, M.V. (2019). 
From Generalized Linear Models to Neural Networks, and back. 
Available at SSRN: https://ssrn.com/abstract=3491790 
\item[-]
W\"uthrich, M.V. (2020). 
Bias regularization in neural network models for general insurance pricing. 
European Actuarial Journal 10, 179-202.
\item[-]
W\"uthrich, M.V. (2021).
The balance property in neural network modelling.
Statistical Theory and Related Fields, in press.  

\item[-]
Zumel, N. (2019).
An ad-hoc method for calibrating uncalibrated models.
Win-Vector Blog, WordPress.

\end{enumerate}

\end{document}